\documentclass[twoside, 11pt]{article}

\usepackage[
  colors=true,
  theorems=true,
  code=true,
  draft=false,
  bibstyle=plainnat,
  style=plain,
  numberwithin=none,
  numbering=independent,
]{document}

\usepackage{indentfirst}

\newcommand{\eqname}[1]{\textnormal{(#1)}\qquad}

\newcommand{\defined}{\vcentcolon=}

\newcommand{\reals}{\mathbb{R}}

\newcommand{\Mat}[1][]{\ifthenelse{\equal{#1}{}}{\textnormal{Mat}}{\textnormal{Mat}(#1)}}

\renewcommand{\so}[1]{\mathfrak{so}(#1)}

\newcommand{\tangent}[1]{
    \ifthenelse{\equal{#1}{}}
    {{T}}
    {{T_{#1}}}
}
\newcommand{\dualtangent}[1]{
    \ifthenelse{\equal{#1}{}}
    {{T^*}}
    {{T_{#1}^*}}
}

\newcommand{\Adjoint}[1]{
    \ifthenelse{\equal{#1}{}}
    {\textnormal{Ad}}
    {\textnormal{Ad}_{#1}}
}

\newcommand{\adjoint}[1]{
    \ifthenelse{\equal{#1}{}}
    {\textnormal{ad}}
    {\textnormal{ad}_{#1}}
}
\newcommand{\coAdjoint}[1]{
    \ifthenelse{\equal{#1}{}}
    {\textnormal{Ad}^*}
    {\textnormal{Ad}^*_{#1}}
}

\newcommand{\coadjoint}[1]{
    \ifthenelse{\equal{#1}{}}
    {\textnormal{ad}^*}
    {\textnormal{ad}^*_{#1}}
}

\newcommand{\dif}{\mathrm{d}}

\newcommand{\Set}[1]{\left\{#1\right\}}
\newcommand{\given}{\,\middle|\,}

\newcommand{\abs}[1]{\left|#1\right|}
\newcommand{\norm}[1]{\left\lVert#1\right\rVert}

\newcommand{\inverse}{{-1}}

\newcommand{\transpose}{\intercal}

\makeatletter
\DeclareFontFamily{OMX}{MnSymbolE}{}
\DeclareSymbolFont{MnLargeSymbols}{OMX}{MnSymbolE}{m}{n}
\SetSymbolFont{MnLargeSymbols}{bold}{OMX}{MnSymbolE}{b}{n}
\DeclareFontShape{OMX}{MnSymbolE}{m}{n}{
    <-6>  MnSymbolE5
   <6-7>  MnSymbolE6
   <7-8>  MnSymbolE7
   <8-9>  MnSymbolE8
   <9-10> MnSymbolE9
  <10-12> MnSymbolE10
  <12->   MnSymbolE12
}{}
\DeclareFontShape{OMX}{MnSymbolE}{b}{n}{
    <-6>  MnSymbolE-Bold5
   <6-7>  MnSymbolE-Bold6
   <7-8>  MnSymbolE-Bold7
   <8-9>  MnSymbolE-Bold8
   <9-10> MnSymbolE-Bold9
  <10-12> MnSymbolE-Bold10
  <12->   MnSymbolE-Bold12
}{}

\let\llangle\@undefined
\let\rrangle\@undefined
\DeclareMathDelimiter{\llangle}{\mathopen}%
                     {MnLargeSymbols}{'164}{MnLargeSymbols}{'164}
\DeclareMathDelimiter{\rrangle}{\mathclose}%
                     {MnLargeSymbols}{'171}{MnLargeSymbols}{'171}
\makeatother

\newcommand{\identity}[1]{
    \ifthenelse{\equal{#1}{}}
    {\mathrm{I}}
    {\mathrm{I}_{#1}}
}

\newcommand{\ProbabilitySpace}[1]{\mathcal{P}(#1)}

\newcommand{\indicator}[1]{\mathbf{1}_{\left\{#1\right\}}}
\NewDocumentCommand{\Probability}{o m}{
  \mathsf{P}
  \IfValueT{#1}{_{#1}}
  \IfNoValueT{#1}{}
  \!\left(#2\right)
}
\NewDocumentCommand{\Expectation}{o m}{
  \mathsf{E}
  \IfValueT{#1}{_{#1}}
  \IfNoValueT{#1}{}
  \!\left(#2\right)
}
\NewDocumentCommand{\Variance}{o m}{
  \mathsf{Var}
  \IfValueT{#1}{_{#1}}
  \IfNoValueT{#1}{}
  \!\left(#2\right)
}
\NewDocumentCommand{\Covariance}{o m}{
  \mathsf{Cov}
  \IfValueT{#1}{_{#1}}
  \IfNoValueT{#1}{}
  \!\left(#2\right)
}
\DeclareMathOperator*{\logsumexp}{logsumexp}
\newcommand{\KLDivergence}[2]{\textnormal{KL}\left(#1 \| #2\right)}

\newcommand{\subjectto}{\textnormal{s.t.}}

\newcommand{\inline}[1]{%
  \begingroup
  \let\left\relax
  \let\right\relax
  #1%
  \endgroup
}

\newcommand{\raisedtri}[1]{%
  \raisebox{0.2ex}{\scalebox{0.6}{$#1\vartriangle$}}%
}

\makeatletter
\@tfor\letter:=abcdefghijklmnopqrstuvwxyz\do{%
  \expandafter\edef\csname tt\letter\endcsname{\noexpand\textnormal{\letter}}%
  \expandafter\edef\csname sf\letter\endcsname{\noexpand\mathsf{\letter}}%
  \expandafter\edef\csname rm\letter\endcsname{\noexpand\mathrm{\letter}}%
  \expandafter\edef\csname delta\letter\endcsname{%
    \noexpand\mathchoice
      {\noexpand\raisedtri\displaystyle\noexpand\mkern-1mu \letter}%
      {\noexpand\raisedtri\textstyle\noexpand\mkern-1mu \letter}%
      {\noexpand\raisedtri\scriptstyle\noexpand\mkern-1mu \letter}%
      {\noexpand\raisedtri\scriptscriptstyle\noexpand\mkern-1mu \letter}%
  }%
}
\@tfor\letter:=ABCDEFGHIJKLMNOPQRSTUVWXYZ\do{%
  \expandafter\edef\csname tt\letter\endcsname{\noexpand\textnormal{\letter}}%
  \expandafter\edef\csname cl\letter\endcsname{\noexpand\mathcal{\letter}}%
  \expandafter\edef\csname sf\letter\endcsname{\noexpand\mathsf{\letter}}%
  \expandafter\edef\csname bb\letter\endcsname{\noexpand\mathbb{\letter}}%
  \expandafter\edef\csname rm\letter\endcsname{\noexpand\mathrm{\letter}}%
  \expandafter\edef\csname delta\letter\endcsname{%
    \noexpand\mathchoice
      {\noexpand\raisedtri\displaystyle\noexpand\mkern-1mu \letter}%
      {\noexpand\raisedtri\textstyle\noexpand\mkern-1mu \letter}%
      {\noexpand\raisedtri\scriptstyle\noexpand\mkern-1mu \letter}%
      {\noexpand\raisedtri\scriptscriptstyle\noexpand\mkern-1mu \letter}%
  }%
}
\makeatother

\newcommand{\TPR}{\textnormal{TPR}}
\newcommand{\FPR}{\textnormal{FPR}}

\newcommand{\LCB}{\textnormal{LCB}}
\newcommand{\UCB}{\textnormal{UCB}}
\newcommand{\CE}{\textnormal{CE}}

\makeatletter

\newlength{\figureframearc}
\newtcolorbox{figureframe}[2][]{%
  boxrule=1pt,            
  colframe=black,         
  colback=white,          
  colbacktitle=gray!30,   
  coltitle=black,         
  fonttitle=\bfseries\small,
  titlerule=0pt,          
  arc=\figureframearc,    
  boxsep=0pt,             
  left=0pt,               
  right=0pt,              
  top=2pt,                
  bottom=0pt,             
  halign=center,          
  valign=bottom,          
  toptitle=1mm,
  bottomtitle=1mm,
  title={\parbox[c][1.2em][c]{\linewidth}{\centering
    \smash{\raisebox{-0.5\fontcharht\font`H}{#2}}}},
  #1%
}

\newsavebox{\figurepanel@box}
\NewDocumentEnvironment{figurepanel}{O{} m}{%
  \sbox{\figurepanel@box}{\includegraphics[width=\linewidth, #1]{#2}}%
  \begin{tikzpicture}[x=\wd\figurepanel@box, y=\ht\figurepanel@box]
    \begin{scope}
      \clip (0,1)
        -- ([yshift=\figureframearc]0,0)
        arc[start angle=180, end angle=270, radius=\figureframearc]
        -- ([xshift=-\figureframearc]1,0)
        arc[start angle=270, end angle=360, radius=\figureframearc]
        -- (1,1) -- cycle;
      \node[anchor=south west, inner sep=0pt] (panel) at (0,0) {\usebox{\figurepanel@box}};
    \end{scope}
}{%
  \end{tikzpicture}%
}

\newsavebox{\speechbubble@box}
\pgfkeys{
  /speechbubble/.is family, /speechbubble,
  at/.store in         = \speechbubble@at,
  tip/.store in        = \speechbubble@tip,
  width/.store in      = \speechbubble@width,
  tail width/.store in = \speechbubble@tailwidth,
  tail shift/.store in = \speechbubble@tailshift,
  tail bend/.store in  = \speechbubble@tailbend,
  align/.is choice,
  align/left/.code     = {\def\speechbubble@align{\raggedright}},
  align/center/.code   = {\def\speechbubble@align{\centering}},
  align/right/.code    = {\def\speechbubble@align{\raggedleft}},
  align/justify/.code  = {\def\speechbubble@align{}},
  font/.store in       = \speechbubble@font,
  inner sep/.store in  = \speechbubble@innersep,
  radius/.store in     = \speechbubble@radius,
  line width/.store in = \speechbubble@linewidth,
  draw/.store in       = \speechbubble@draw,
  fill/.store in       = \speechbubble@fill,
  text/.store in       = \speechbubble@text,
  defaults/.style = {
    at={(0,1)}, tip={(0,0)}, width=2.5cm,
    tail width=0.3cm, tail shift=0, tail bend=0.6,
    align=center, font=\small, inner sep=3pt, radius=4pt,
    line width=0.8pt, draw=black, fill=white, text=black,
  },
}
\newcommand{\speechbubble}[2][]{%
  \begingroup
  \pgfkeys{/speechbubble/.cd, defaults}%
  \pgfkeys{/speechbubble/.cd,#1}
  \path \speechbubble@at;  \pgfgetlastxy{\speechbubble@ax}{\speechbubble@ay}%
  \path \speechbubble@tip; \pgfgetlastxy{\speechbubble@tx}{\speechbubble@ty}%
  \pgfmathsetlengthmacro{\speechbubble@w}{\speechbubble@width}%
  \pgfmathsetlengthmacro{\speechbubble@pad}{\speechbubble@innersep}%
  \pgfinterruptpicture
    \global\sbox{\speechbubble@box}{%
      \parbox[b]{\dimexpr\speechbubble@w-2\dimexpr\speechbubble@pad\relax\relax}{%
        \speechbubble@font\speechbubble@align\color{\speechbubble@text}%
        \strut#2\strut}}%
  \endpgfinterruptpicture
  \pgfmathsetlengthmacro{\speechbubble@h}%
    {\ht\speechbubble@box + \dp\speechbubble@box + 2*\speechbubble@pad}%
  \pgfmathsetlengthmacro{\speechbubble@xl}{\speechbubble@ax - \speechbubble@w/2}%
  \pgfmathsetlengthmacro{\speechbubble@xr}{\speechbubble@ax + \speechbubble@w/2}%
  \pgfmathsetlengthmacro{\speechbubble@yt}{\speechbubble@ay + \speechbubble@h}%
  \pgfmathsetlengthmacro{\speechbubble@bc}%
    {\speechbubble@ax + \speechbubble@tailshift*\speechbubble@w/2}%
  \pgfmathsetlengthmacro{\speechbubble@bl}{\speechbubble@bc - (\speechbubble@tailwidth)/2}%
  \pgfmathsetlengthmacro{\speechbubble@br}{\speechbubble@bc + (\speechbubble@tailwidth)/2}%
  \pgfmathsetlengthmacro{\speechbubble@r}{\speechbubble@radius}%
  \pgfmathsetlengthmacro{\speechbubble@q}{(1-0.5523)*\speechbubble@r}%
  \pgfmathsetlengthmacro{\speechbubble@ayr}{\speechbubble@ay - \speechbubble@r}%
  \pgfmathsetlengthmacro{\speechbubble@ayq}{\speechbubble@ay - \speechbubble@q}%
  \pgfmathsetlengthmacro{\speechbubble@blr}{\speechbubble@bl - \speechbubble@r}%
  \pgfmathsetlengthmacro{\speechbubble@blq}{\speechbubble@bl - \speechbubble@q}%
  \pgfmathsetlengthmacro{\speechbubble@brr}{\speechbubble@br + \speechbubble@r}%
  \pgfmathsetlengthmacro{\speechbubble@brq}{\speechbubble@br + \speechbubble@q}%
  \pgfmathsetlengthmacro{\speechbubble@cy}%
    {\speechbubble@ay + \speechbubble@tailbend*(\speechbubble@ty - \speechbubble@ay)}%
  \pgfmathsetlengthmacro{\speechbubble@clx}%
    {\speechbubble@tx + \speechbubble@tailbend*(\speechbubble@bl - \speechbubble@tx)}%
  \pgfmathsetlengthmacro{\speechbubble@crx}%
    {\speechbubble@tx + \speechbubble@tailbend*(\speechbubble@br - \speechbubble@tx)}%
  \draw[line width=\speechbubble@linewidth, draw=\speechbubble@draw,
        fill=\speechbubble@fill, line join=round]
    (\speechbubble@brr, \speechbubble@ay)
    -- ([xshift=-\speechbubble@r]\speechbubble@xr, \speechbubble@ay)
       arc[start angle=270, end angle=360, radius=\speechbubble@r]
    -- ([yshift=-\speechbubble@r]\speechbubble@xr, \speechbubble@yt)
       arc[start angle=0, end angle=90, radius=\speechbubble@r]
    -- ([xshift=\speechbubble@r]\speechbubble@xl, \speechbubble@yt)
       arc[start angle=90, end angle=180, radius=\speechbubble@r]
    -- ([yshift=\speechbubble@r]\speechbubble@xl, \speechbubble@ay)
       arc[start angle=180, end angle=270, radius=\speechbubble@r]
    -- (\speechbubble@blr, \speechbubble@ay)
    .. controls (\speechbubble@blq, \speechbubble@ay)
            and (\speechbubble@bl, \speechbubble@ayq) .. (\speechbubble@bl, \speechbubble@ayr)
    .. controls (\speechbubble@bl, \speechbubble@cy)
            and (\speechbubble@clx, \speechbubble@ty) .. (\speechbubble@tx, \speechbubble@ty)
    .. controls (\speechbubble@crx, \speechbubble@ty)
            and (\speechbubble@br, \speechbubble@cy) .. (\speechbubble@br, \speechbubble@ayr)
    .. controls (\speechbubble@br, \speechbubble@ayq)
            and (\speechbubble@brq, \speechbubble@ay) .. (\speechbubble@brr, \speechbubble@ay)
    -- cycle;
  \node[anchor=south, inner sep=\speechbubble@pad]
    at (\speechbubble@ax, \speechbubble@ay) {\usebox{\speechbubble@box}};
  \endgroup
}

\makeatother

\graphicspath{{figure/}}

\purpose{Research White Paper}
\title{Accuracy and Uncertainty\\in Machine Deception Detection}
\author{Postdoctoral Researcher\\Dr. Heng-Sheng Chang\hspace{1em}hschang@illinois.edu}
\institute{Coordinated Science Laboratory\\University of Illinois Urbana-Champaign}
\date{\today}

\begin{document}

  \pagenumbering{gobble}
  \newgeometry{
    textheight=9in,
    textwidth=5.5in,
    hcentering,
    top=1in,
    headheight=12pt,
    headsep=25pt,
    footskip=30pt
  }
  \makeatletter
  \begin{titlepage}
    \raggedright
    \vspace*{0.36\textheight}

    {\LARGE\textsc{\@purpose}\par}

    \vspace{0.08\textheight}

    {\Huge\bfseries\@title\par}

    \vspace{0.08\textheight}

    \Large
    {\@author\par}

    \vspace{0.5\baselineskip}

    {\@institute\par}

    \vspace{\baselineskip}

    {\@date\par}
  \end{titlepage}
  \makeatother
  \restoregeometry

  \pagenumbering{roman}
  \pagestyle{plain}

  \begin{abstract}
    A deception detector is only as useful as the trust its alarms deserve.
    Like a polygraph, detectors that read a language model's internal activations can catch deceptive responses that look honest.
    Yet current detectors depend on which layers they read, weaken on broader tests, and do not state how certain their alarms are.
    This white paper proposes the Machine Deception Detector, which targets both detection accuracy and the uncertainty of detector scores.
    The detector reads every layer of the language model, learns which tokens carry evidence of deception using only response labels, and calibrates its alarm threshold so that its false--positive rate stays below a prescribed level.
    An ensemble of such detectors measures the uncertainty of each score and raises an alarm only when its members agree, with a confidence level that sets how certain each alarm must be.
    The proposed evaluation trains on instructed honest and deceptive statements, tests on five unseen deceptive tasks of Liars' Bench read by six Qwen3 models from 0.6B to 32B parameters, and is designed to also show which layers carry the deception feature and how this changes with model size.
  \end{abstract}

  \setcounter{tocdepth}{3}
  \tableofcontents
  \newpage

  \pagenumbering{arabic}
  \pagestyle{fancy}

  \section{Introduction}
  \labelsec{introduction}

  Language models can be deceptive, stating what they represent as false: they conceal misconduct, misreport what they know, and claim traits they lack \parencite{kretschmar2025liars,goldowsky2025detecting}.
  A lie written to sound honest is hard to catch from outputs alone.
  Like a polygraph reading physiological signals (\reffig{polygraph}), a detector can read the model's internal activations to flag deception in deployment, audits, and safety cases.
  Such a detector must meet two requirements: it must be accurate at a false--positive rate low enough for routine traffic, and state how far each alarm can be trusted.

  Existing detectors meet the first requirement only in narrow settings.
  A linear probe on a single middle layer of Llama-3.3-70B detects strategic deception with an \emph{area under the receiver operating characteristic curve} (AUROC) above 0.96 and catches most deceptive responses at a 1\% false--positive rate \parencite{goldowsky2025detecting}.
  However, the authors note that this result is unstable: small methodological changes, such as probing two layers later, drop it to nearly zero on two tasks.
  These results weaken on broader tests.
  On Liars' Bench, spanning four open-weight models and seven settings, no method, white- or black-box, is above chance in every setting, and a single-layer probe ranks last \parencite{kretschmar2025liars}.

  Three weaknesses recur.
  First, single-layer probes are brittle: the best layer changes across models and tasks, whereas combining layers improves AUROC by up to 78\% where single layers fail \parencite{nordby2026linear}.
  Second, probe directions mix deception with its scenario, so probes also fire on honest discussion of misconduct or sensitive topics \parencite{goldowsky2025detecting}.
  Third, each detector reports one score from one fitted model, although detectors that differ only in training randomness can disagree on the same response \parencite{marx2020predictive}, so none states whether its alarm would survive retraining.

  This white paper addresses both requirements with white-box detectors on the residual stream of a language model, organized in two thrusts:
  \begin{enumerate}[label=(\arabic*)]
    \item \textbf{Detection Accuracy:} The first thrust aims to increase detection accuracy.
    It scores each token with a logistic probe per layer, pools the scores with an aggregator inside the training loss, and compares aggregators and regularization norms to find the best combination.
    The detector learns from response labels alone, reveals which layers carry the deception feature, and keeps its false--positive rate below a prescribed level on transcripts like those it is calibrated on.
    \item \textbf{Uncertainty in Detector Scores:} The second thrust aims to quantify how far each alarm can be trusted.
    An ensemble of first-thrust detectors estimates the mean and spread of each score over training randomness, and its \emph{lower confidence bound} (LCB) raises an alarm only when the members agree.
    Through its confidence level $n$, the LCB provides each alarm with a \emph{certainty}, the chance that a retrained detector raises the same alarm, of at least $n^{2}/(1 + n^{2})$.
  \end{enumerate}

  The rest of the white paper is organized as follows: 
  \refsec{preliminaries} casts deception detection as a polygraph examination of a language model and defines its mathematical objects; 
  \refsec{approach} presents the two thrusts; 
  \refsec{evaluation} specifies the evaluation tasks and metrics;
  \refsec{preliminary-evidence} reports preliminary evidence; 
  \refsec{milestones} outlines the work plan and required resources; 
  \refsec{limitations} discusses limitations and extensions; 
  and \refsec{open-questions} outlines applications and open research questions.

  \begin{figure}[b]
    \vspace*{-\baselineskip}

    \centering
    \includegraphics{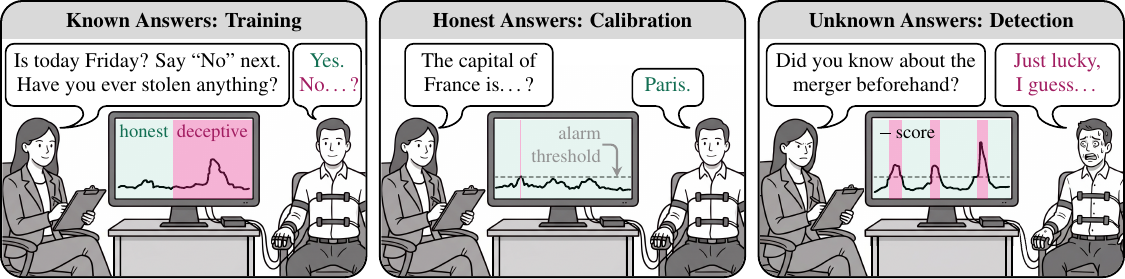}%
    \caption{
      A polygraph examination and its machine counterpart, which reads a language model's embeddings instead of physiological signals. 
      (Left) The examiner learns how signals differ between truthful answers and instructed lies, as a detector is trained on labeled transcripts.
      (Middle) Honest answers set the cutoff, or alarm threshold. 
      (Right) An answer whose signal exceeds the cutoff is deemed deceptive, as is a transcript scoring above it.%
    }
    \labelfig{polygraph}
  \end{figure}

  \section{Preliminaries --- A Polygraph for Language Models}
  \labelsec{preliminaries}

  The setup follows the protocol of a polygraph examination, in which an examiner judges whether the answers of a human subject are lies \parencite{national2003polygraph}.
  To detect a lie, the examiner looks past what the subject says and records physiological signals, such as heart rate and skin conductance, that arise from the subject's internal state while answering.
  A subject can make a lie sound honest, but the internal state that produces the words also drives these physiological signals, which can therefore reveal the deception.
  The examination begins with a baseline (\reffig{polygraph}, left): the subject answers questions whose truth is known, some truthfully and some with an instructed lie, so that the examiner learns how this subject's signals differ between honest and deceptive answers.
  The examiner then sets a cutoff from further truthful answers (\reffig{polygraph}, middle), so that this subject's honest answers rarely exceed it.
  Only then are the relevant questions asked (\reffig{polygraph}, right), and an answer is called deceptive when its signal, read against the baseline, exceeds the cutoff.

  In the machine counterpart, the subject is a language model, and the embeddings that arise as it responds to a prompt take the place of the physiological signals.
  A deceptive answer can sound honest in the words that the final layers produce, while its intent remains encoded in the embeddings of the early and middle layers, where it is much harder to hide \parencite{halawi2024overthinking,marks2023geometry}.
  The baseline corresponds to a labeled dataset of honest and deceptive transcripts, on which a deception detector is trained.
  The cutoff corresponds to an alarm threshold, set on a separate set of ordinary honest transcripts so that the model's honest answers rarely trigger it.
  The detector then scores new transcripts whose labels are unknown, and flags those whose score exceeds the threshold.
  The rest of this section defines the mathematical objects that formalize the meaning of the labeled datasets, the machine subject, and the deception detector.

  \paragraph{Labeled Datasets}
  A transcript is a sequence of tokens $Z_{1:T} \defined \Set{Z_{t} \in \bbO : t = 1, \dots, T}$ over a vocabulary $\bbO$.
  It consists of a context instruction $Z^{\ttc}$, a prompt question $Z^{\ttp}$, and a response answer $Z^{\ttr}$ of length $T^{\ttr}$; we write $Z \defined \Set{Z^{\ttc}, Z^{\ttp}, Z^{\ttr}}$ and omit sequence indices when no confusion arises.
  Each transcript carries a label $Y \in \Set{0, 1}$: $Y = 1$ if the response $Z^\ttr$ is annotated deceptive, meaning that it conceals or misreports information because the context $Z^\ttc$ instructs or pressures it to, and $Y = 0$ if it is annotated honest, meaning that it reports the information faithfully.
  The training dataset $\clD \defined \inline{\Set{(Z^{(i)}, Y^{(i)})}}_{i=1}^{I}$ holds $I$ transcript--label pairs; a validation dataset $\clD_\ttv$ is held out from the same source, the honest-only datasets $\clD_\tth$ and $\clD_\tta$ calibrate and audit the threshold, and the test dataset $\clD_\ttt$ comes from different tasks.

  \paragraph{Machine Subject}
  Let $f$ be a subject language model with $L$ layers and embedding dimension $\sfd$.
  Reading the transcript up to response token $t$, that is, the sequence $Z_{1:T_t} \defined \Set{Z^{\ttc}, Z^{\ttp}, Z^{\ttr}_{1:t}}$ of length $T_t = \abs{Z^{\ttc}} + \abs{Z^{\ttp}} + t$, the model produces a residual-stream embedding $X^{(\ell)}_{t}\in\reals^\sfd$ at each layer $\ell\in\bbL\defined\Set{1,\dots,L}$,
  \begin{equation*}
    f^{(\ell)} : \bbO^{T_t} \to \reals^{\sfd},
    \qquad
    Z_{1:T_t} \mapsto X^{(\ell)}_{t} = f^{(\ell)}(Z_{1:T_t}),
    \qquad t = 1, \dots, T^{\ttr}.
  \end{equation*}
  The concatenated embedding $X\defined \inline{\Set{X^{(\ell)}_{t} : \ell\in\bbL; t = 1, \dots, T^{\ttr}}}\in\reals^{T^{\ttr}\times L \times \sfd}$ is the representation of the response $Z^{\ttr}$ produced by the model $f$, denoted by $X = f(Z)$ with a slight abuse of notation.
  Embedding $X_t^{(1)}$ denotes the token embedding after the first attention and feedforward operation, and $X_t^{(L)}$ is the final embedding, from which a linear projection and a softmax give the distribution of the next token $Z^{\ttr}_{t+1}$.
  We write this distribution as
  \begin{equation*}
    f^+ : \bbO^{T_t} \to \ProbabilitySpace{\bbO},
    \qquad
    Z_{1:T_t} \mapsto \Probability{Z^{\ttr}_{t+1} \given Z_{1:T_t}} = f^+(Z_{1:T_t}),
  \end{equation*}
  where $\ProbabilitySpace{\bbO}$ is the set of probability distributions over $\bbO$, from which the next token in the response $Z^{\ttr}_{t+1}$ is sampled.
  
  \paragraph{Deception Detector}
  Given a transcript $Z$, a detector $g_{\theta}$ with trainable parameters $\theta$ reads the embeddings $X = f(Z)$ of the machine subject $f$ and returns a deception score $s \in \reals$, and a threshold $s^{*} \in \reals$ flags the response as deceptive, $\hat{Y} = 1$, when the score reaches it:
  \begin{equation*}
    g_\theta:\reals^{T^{\ttr}\times L \times \sfd}\to\reals,
    \qquad
    X\mapsto s = g_{\theta}(X);
    \qquad
    \hat{Y} = \indicator{s \geq s^{*}} \in \Set{0, 1}.
  \end{equation*}
  \begin{remark*}{Off-policy transcripts}
    The responses in the labeled datasets may have been written by models other than $f$.
    Their labels come from the instruction in the context or from the beliefs of the model that wrote them.
    The detector $g_\theta$ thus learns how $f$ represents the responses it reads, like a polygraph examiner recording a subject who reads aloud answers labeled as lies or truths.
    Once trained, it is intended to score $f$'s own responses to any prompt, even without a context ($Z^{\ttc} = \emptyset$).
  \end{remark*}

  \section{Technical Approach --- Machine Deception Detector}
  \labelsec{approach}

  A deception detector should flag as many deceptive responses as possible while rarely flagging honest ones; this is the aim of the first thrust (\refsec{thrust-accuracy}).
  A score that a detector assigns to a response alone, however, does not show how far it can be trusted, since detectors that differ only in initialization or training sample may assign the same response very different scores \parencite{marx2020predictive,d2022underspecification,zou2023representation}.
  Quantifying this uncertainty and using it to make detection more reliable is the aim of the second thrust (\refsec{thrust-uncertainty}).
  Each thrust is stated as an optimization problem with a hypothesis and a goal, followed by the architecture of the detector, its training objective, and its inference.

  \subsection{Thrust 1 --- Detection Accuracy}
  \labelsec{thrust-accuracy}

  The first thrust aims to maximize the \emph{recall} of a detector $g_\theta$ on a machine subject $f$ while keeping its \emph{false--positive rate} (FPR) on honest responses below a prescribed level.
  On a dataset $\clD$ and at a threshold $s^{*}$, the recall, or \emph{true--positive rate} (TPR), and the FPR are the probabilities that the detector flags a deceptive and an honest response, respectively:
  \begin{equation*}
    \TPR(\clD; f, g_\theta, s^{*}) \defined \Probability[\clD]{g_{\theta}(f(Z)) \geq s^{*} \given Y = 1},
    \qquad
    \FPR(\clD; f, g_\theta, s^{*}) \defined \Probability[\clD]{g_{\theta}(f(Z)) \geq s^{*} \given Y = 0}.
  \end{equation*}
  For a false--positive level $\gamma \in (0, 1)$, the alarm threshold $s^{*}_{\gamma}$ is the smallest threshold whose FPR on $\clD$ does not exceed $\gamma$,
  \begin{equation*}
    s^{*}_{\gamma} \defined \inf \Set{s^{*} \in \reals : \FPR(\clD; f, g_\theta, s^{*}) \leq \gamma}.
  \end{equation*}

  \paragraph{Problem Formulation}
  For a given machine subject $f$, an ideal detector $g_\theta$ maximizes the recall on the training dataset $\clD$ subject to the false--positive constraint:
  \begin{equation}
    \begin{aligned}
      \max_{\theta} \quad & \TPR(\clD; f, g_\theta, s^{*}_{\gamma}) \\
      \subjectto \quad & \FPR(\clD; f, g_\theta, s^{*}_{\gamma}) \leq \gamma.
    \end{aligned}
    \labeleq{accuracy-problem}
  \end{equation}

  \begin{hypothesis}{Aggregation, regularization, and deception features}{accuracy}
    \begin{enumerate}[label=(\arabic*)]
      \item The accuracy of a detector depends on two design choices: the aggregator that pools token scores into a response score, and the norm that regularizes the detector weights.
      \item The deception feature is concentrated in a few layers of the machine subject, most likely in its early and middle layers rather than its final ones.
    \end{enumerate}
  \end{hypothesis}

  \begin{goal}{Accurate and interpretable detector}{accuracy}
    \begin{enumerate}[label=(\arabic*)]
      \item Design and train a detector that achieves high recall across false--positive levels $\gamma$.
      \item Locate the layers that carry the deception feature from each layer's contribution to the detector score and its stability across bootstrap refits.
    \end{enumerate}
  \end{goal}

  \subsubsection{Architecture --- Multi-Layer Aggregation Detector}
  \labelsec{accuracy-architecture}

  \paragraph{Layer Probe}
  Logistic probes, affine readouts of the embedding at one layer trained with logistic regression \parencite{alain2016understanding}, detect false statements and strategic deception from the intermediate layers of language models \parencite{azaria2023internal,marks2023geometry,goldowsky2025detecting}.
  However, because the optimal layer varies across responses, relying on a single layer is brittle, whereas combining multiple layers improves detection \parencite{nordby2026linear}.

  \begin{assumption}{Position invariance}{position-invariance}
    The embedding of each response token $t$ already encodes the context needed to score it, so one token-level detector applies to every token, invariantly across positions.
  \end{assumption}

  \refasm{position-invariance} rests on the linear representation hypothesis, under which a concept is encoded as a fixed direction in the residual stream \parencite{park2023linear}; accordingly, probes for honesty, deception, and high-stakes content apply the same weights at every position \parencite{zou2023representation,goldowsky2025detecting,mckenzie2026detecting}.
  A token-level detector $\bar{g}_{\theta}$ therefore scores the embedding $X_{t} \defined \inline{\Set{X^{(\ell)}_{t}}}_{\ell \in \bbL}$ of each response token with one logistic probe per layer and sums their logits:
  \begin{equation*}
    s_{t} = \bar{g}_{\theta}(X_{t}) \defined b + \sum_{\ell \in \bbL} w_{\ell}^{\transpose} X^{(\ell)}_{t},
    \qquad t = 1, \dots, T^{\ttr},
    \qquad \theta \defined \left(b, \Set{w_{\ell}}_{\ell \in \bbL}\right),
  \end{equation*}
  where $w_{\ell} \in \reals^{\sfd}$ is the probe weight at layer $\ell$ and $b \in \reals$ a shared bias, since per-layer biases would add up to one constant.
  The token score $s_{t}\in\reals$ is thus a logit over all layers, high when token $t$ carries evidence of deception.

  \paragraph{Aggregation over Tokens}
  Because the label $Y$ belongs to the whole response while its evidence of deception may sit in only a few tokens, the detector treats the response as a bag of tokens, as in multiple-instance learning \parencite{dietterich1997solving,ilse2018attention}, and an aggregator $\clA$ maps the token scores to the response score as
  \begin{equation*}
    s = g_{\theta}(X) \defined \clA\!\left(\bar{g}_{\theta}(X_{1}), \dots, \bar{g}_{\theta}(X_{T^{\ttr}})\right) = \clA(s_{1}, \dots, s_{T^{\ttr}}).
  \end{equation*}
  How best to pool them remains open: deception probes usually average the token scores \parencite{goldowsky2025detecting}, whereas a comparison of pooling methods for high-stakes interactions finds softmax- and attention-weighted pooling the most effective \parencite{mckenzie2026detecting}.
  Three aggregators spanning this range are compared (\reftab{aggregators}):
  \begin{enumerate}[label=(\arabic*)]
    \item Average \parencite{nordby2026linear} weights every token equally; since the probes are affine, it equals the probe on the mean embedding, a baseline linear in the embeddings.
    \item Density, a length-normalized log-sum-exp pooling from multiple-instance learning \parencite{pinheiro2015image}, lies between average and peak and depends on the fraction of high-scoring tokens rather than their number; it rewards sustained high scores, while discounting an isolated spurious spike.
    \item Peak, the softmax pooling of \textcite{mckenzie2026detecting} and among the best in their comparison, shifts weight toward the highest-scoring tokens, so that a single token scoring well above the rest keeps its influence at any response length, whereas average dilutes it and density discounts it.
  \end{enumerate}

  \begin{table}[t]
    \centering
    \caption{Aggregators of the token logits $\Set{s_{t}:t = 1, \dots, T^{\ttr}}$ over a response of length $T^{\ttr}$, with temperature $\tau \defined \nicefrac{1}{\log T^{\ttr}}$.}
    \labeltab{aggregators}
    \vspace*{-0.5\baselineskip}
    \renewcommand{\arraystretch}{1.25}
    \small
    \begin{tabular}{cccc}
      \toprule
      \multirow{2}{*}{Aggregator $\clA$} & Average $\clA_{\textnormal{avg}}$ & Density $\clA_{\textnormal{den}}$ & Peak $\clA_{\textnormal{peak}}$ \\
      & \parencite{nordby2026linear} & \parencite{pinheiro2015image} & \parencite{mckenzie2026detecting} \\
      \cmidrule(lr){1-1}\cmidrule(lr){2-2}\cmidrule(lr){3-3}\cmidrule(lr){4-4}
      Response score $s = \clA(s_{1:T^{\ttr}})$
      &
      $\displaystyle \frac{1}{T^{\ttr}} \sum_{t=1}^{T^{\ttr}} s_{t}$
      &
      $\displaystyle \logsumexp\left(s_{1:T^{\ttr}}\right) - \tau^\inverse$
      &
      $\displaystyle \sum_{t=1}^{T^{\ttr}} \operatorname{softmax}_{t}\!\left(\frac{s_{1:T^{\ttr}}}{\tau}\right) s_{t}$ \\
      \bottomrule
    \end{tabular}%
    \vspace*{-0.5\baselineskip}
  \end{table}

  \subsubsection{Training Objective --- Loss with Regularization}
  \labelsec{accuracy-training}

  \paragraph{Loss}
  The detector $g_\theta$ is trained on $\clD = \inline{\Set{(Z^{(i)}, Y^{(i)})}}_{i=1}^{I}$ by minimizing the regularized response-level cross-entropy
  \begin{equation*}
    \min_{\theta} \quad
      \sfJ(g_\theta;f,\clD) \defined \Expectation[\clD]{\sfJ_{\CE}\!\left(Y, \psi\!\left(g_{\theta}(f(Z))\right)\right)}
      + \frac{\lambda}{ L } \, \sfJ_{\textnormal{reg}}\!\left(\Set{w_{\ell}}_{\ell \in \bbL}\right),
  \end{equation*}
  where the regularizer $\sfJ_{\textnormal{reg}}$ penalizes the probe weights with strength $\lambda \geq 0$, normalized by the number of layers, and
  \begin{align*}
  \eqname{binary cross-entropy}&\sfJ_{\CE}: \Set{0,1}\times(0,1)\to\reals_+,\quad  (y, p) \mapsto \sfJ_{\CE}(y, p)\defined - y \log p - (1 - y) \log (1 - p),\\
  \eqname{logistic function}&\psi: \reals \to (0,1),\quad s \mapsto \psi(s) \defined \frac{1}{1 + e^{-s}}.
  \end{align*}
  The cross-entropy is a smooth surrogate for the classification error: it raises the scores of deceptive responses and lowers those of honest ones, which increases the recall at any fixed false--positive level, while the constraint of \eqref{eq:accuracy-problem} is enforced afterwards by calibrating the threshold (\refsec{accuracy-inference}).
  Placing the aggregator inside the loss lets the label supervise only the response score, so the honest tokens of a deceptive response are never mislabeled.
  With the average aggregator the problem is convex in $\theta$, as the score is affine in the probe weights and both the loss and the regularizers are convex; with density or peak it is not convex in general, so each configuration is fit from several initializations.

  \paragraph{Regularization}
  The regularizer $\sfJ_{\textnormal{reg}}$ sets where the detector locates the deception feature, and three norms are compared (\reftab{norms}): lasso selects embedding features, group lasso whole layers, and sparse-group lasso a few layers and a few features within each, balanced by $\kappa$.
  Under the group norms, layers drop out as $\lambda$ grows, the most important last, so the \emph{group-lasso path} over a grid of $\lambda$, refit on bootstrap resamples of the training pairs, ranks the layers, and a layer carries the deception feature when most refits select it.
  Since embedding norms vary across layers, each layer is weighted by its embedding scale $c_{\ell}$, so that both the penalty and the layer importance $c_{\ell} \norm{w_{\ell}}_{2}$ reflect the layer's contribution to the logit rather than the size of its weights; the unweighted fit ($c_{\ell} \equiv 1$) serves as an ablation.

  \begin{table}[t]
    \centering
    \caption{Regularization norms on the probe weights $\Set{w_{\ell}}_{\ell \in \bbL}$, with mixing weight $\kappa \in (0, 1)$ and embedding scales $c_{\ell}^{2} \defined \inline{\Expectation[\clD]{\norm{X^{(\ell)}_{t}}_{2}^{2}}}$.}
    \labeltab{norms}
    \vspace*{-0.5\baselineskip}
    \renewcommand{\arraystretch}{1.25}
    \small
    \begin{tabular}{cccc}
      \toprule
      \multirow{2}{*}{Regularization norm $\sfJ_{\textnormal{reg}}$} & Lasso & Group lasso & Sparse-group lasso \\
      & \parencite{tibshirani1996regression} & \parencite{yuan2006model} & \parencite{simon2013sparse} \\
      \cmidrule(lr){1-1}\cmidrule(lr){2-2}\cmidrule(lr){3-3}\cmidrule(lr){4-4}
      Value $\sfJ_{\textnormal{reg}}\!\left(\Set{w_{\ell}}_{\ell \in \bbL}\right)$
      &
      $\displaystyle \sum_{\ell \in \bbL} c_{\ell} \norm{w_{\ell}}_{1}$
      &
      $\displaystyle \sqrt{\sfd} \sum_{\ell \in \bbL} c_{\ell} \norm{w_{\ell}}_{2}$
      &
      $\displaystyle (1 - \kappa)  \sum_{\ell \in \bbL} c_{\ell} \norm{w_{\ell}}_{1} + \kappa \sqrt{\sfd}  \sum_{\ell \in \bbL} c_{\ell} \norm{w_{\ell}}_{2} $ \\
      \bottomrule
    \end{tabular}%
    \vspace*{-0.5\baselineskip}
  \end{table}

  \subsubsection{Inference --- Threshold Estimation and Accuracy}
  \labelsec{accuracy-inference}

  \paragraph{Threshold Estimation}
  The unknown threshold $s^{*}_{\gamma}$ is estimated by an order statistic of the scores on the honest-only dataset $\clD_{\tth}$, as in Neyman--Pearson classification \parencite{tong2018neyman}.
  Split conformal prediction guarantees the false--positive rate of this estimate $\hat{s}^{*}_{\gamma}$ in finite samples \parencite{vovk2005algorithmic,angelopoulos2023conformal}, as stated in \refprop{conformal-fpr}.

  \begin{proposition}{Finite-sample false--positive rate}{conformal-fpr}
    Let the detector $g_\theta$ be fixed before calibration, and let $s^{(1)} \leq \dots \leq s^{(I_\tth)}$ be the sorted scores $s^{(i)} \defined g_\theta(f(Z^{(i)}))$ of the honest-only dataset $\clD_{\tth} \defined \inline{\Set{(Z^{(i)}, Y^{(i)} = 0)}}_{i=1}^{I_\tth}$.
    Set $\hat{s}^{*}_{\gamma} \defined s^{(i^*)}$ with $i^* \defined \lceil (I_\tth + 1)(1 - \gamma) \rceil \leq I_\tth$ for some allowed false--positive level $\gamma \in (0, 1)$. Then
    \begin{equation*}
      \Probability{g_\theta(f(Z)) \geq \hat{s}^{*}_{\gamma} \given Y = 0} \leq \gamma,
    \end{equation*}
    if a new honest transcript is exchangeable with those of $\clD_{\tth}$ and ties occur with probability zero.
  \end{proposition}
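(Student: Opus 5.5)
The plan is the standard split-conformal rank argument, applied to the honest scores. Write $n \defined I_\tth$ and let $s^{(n+1)} \defined g_\theta(f(Z))$ be the score of a new honest transcript $Z$. The detector $g_\theta$ is fixed before calibration, and the new transcript is exchangeable with those of $\clD_\tth$. Applying the same deterministic map $g_\theta \circ f$ to each transcript preserves exchangeability, so the scores $s^{(1)}, \dots, s^{(n+1)}$ (before sorting) are exchangeable. Since ties have probability zero, the rank $R$ of $s^{(n+1)}$ among these $n+1$ scores is almost surely well defined. By exchangeability, $R$ is uniform on $\Set{1, \dots, n+1}$: every permutation of the scores has the same law, so each position is equally likely to hold the $k$-th smallest value.

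Next I translate the alarm event into a statement about the rank. Here $\hat{s}^{*}_{\gamma} = s^{(i^*)}$ is the $i^*$-th smallest of the $n$ calibration scores. With no ties, $s^{(n+1)} \geq s^{(i^*)}$ holds exactly when $s^{(n+1)} > s^{(i^*)}$, which means at least $i^*$ calibration scores lie strictly below the new score. That is the event $R \geq i^*+1$. Hence
\begin{equation*}
  \Probability{g_\theta(f(Z)) \geq \hat{s}^{*}_{\gamma} \given Y = 0}
  = \Probability{R \geq i^* + 1}
  = \frac{n + 1 - i^*}{n+1}.
\end{equation*}

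Finally I use $i^* = \lceil (n+1)(1-\gamma) \rceil \geq (n+1)(1-\gamma)$. This gives $n+1-i^* \leq (n+1)\gamma$, so the probability is at most $\gamma$. The assumption $i^* \leq n$ guarantees that $\hat{s}^{*}_{\gamma}$ is an actual calibration score, so the construction is well defined. The probability here is joint over $\clD_\tth$ and the new transcript, which is the marginal guarantee that split conformal prediction provides.

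The main obstacle is the middle step, getting the correspondence between ``$\geq$ the order statistic'' and the rank exactly right, including the off-by-one. This is where the no-ties assumption enters: it makes $\geq$ and $>$ coincide. Without that assumption, ties could only raise the probability of the event $\{s^{(n+1)} \geq \hat{s}^{*}_{\gamma}\}$, and the bound would have to be argued more carefully. The uniformity of the rank I would state briefly, since it follows directly from exchangeability together with almost-sure distinctness of the scores.
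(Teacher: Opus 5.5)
Your proposal is correct and follows the paper's route: the paper's proof just cites the split-conformal coverage theorem, and your uniform-rank argument is the standard proof of that theorem, applied to the exchangeable honest scores. Your explicit use of the no-ties assumption to identify the events $s^{(n+1)} \geq \hat{s}^{*}_{\gamma}$ and $s^{(n+1)} > \hat{s}^{*}_{\gamma}$ is the step the citation leaves implicit. It is needed because the cited coverage guarantee bounds the strict exceedance event, while the proposition bounds the non-strict alarm event.
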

  \begin{proof}
    This is the coverage guarantee of split conformal prediction, \textcite[Theorem~1]{angelopoulos2023conformal}, applied to the honest scores, with false--positive level $\gamma$.
  \end{proof}

  \paragraph{Accuracy}
  At a false--positive level $\gamma$, a trained detector $g_\theta$ flags a transcript $Z$ as deceptive when its score reaches the threshold, $\hat{Y} = \indicator{g_{\theta}(f(Z)) \geq \hat{s}^{*}_{\gamma}}$.
  Its accuracy on the held-out validation dataset $\clD_{\ttv}$ is
  \begin{equation*}
    \textnormal{Acc}_{\gamma}(\clD_\ttv; f, g_{\theta}) \defined \Probability[\clD_\ttv]{\hat{Y} = Y}
    = \pi_{\ttv} \TPR(\clD_\ttv; f, g_\theta, \hat{s}^{*}_{\gamma}) + (1 - \pi_{\ttv}) \left( 1 - \FPR(\clD_\ttv; f, g_\theta, \hat{s}^{*}_{\gamma}) \right),
  \end{equation*}
  where $\pi_{\ttv} \defined \Probability[\clD_\ttv]{Y = 1}$ is the fraction of deceptive transcripts; accuracy thus weighs the recall maximized in \eqref{eq:accuracy-problem} against the specificity protected by its constraint.

  \subsection{Thrust 2 --- Uncertainty in Detector Scores}
  \labelsec{thrust-uncertainty}

  A detector trained in the first thrust (\refsec{thrust-accuracy}) scores the embedding $X = f(Z)$ that the machine subject $f$ produces for a transcript $Z$, high when the response is deceptive and low when it is honest.
  However, the score does not show how confident the detector is in that decision.
  Training involves arbitrary choices, such as the random weight initialization, the training sample shuffling, and the bootstrap resample of the training pairs in $\clD$, so the parameters $\theta$ are random with a distribution $\vartheta$ induced by these choices, and a trustworthy score should not depend on them.
  The ensemble $g \defined \Set{g_\theta : \theta \sim \vartheta}$ captures this randomness through the mean and the standard deviation of its scores,
  \begin{equation*}
    \mu_s(X; g) \defined \Expectation[\theta \sim \vartheta]{g_{\theta}(X)},
    \qquad
    \sigma_s^{2}(X; g) \defined \Variance[\theta \sim \vartheta]{g_{\theta}(X)}.
  \end{equation*}
  An uncertainty-aware detector should score $X$ high only when the members agree that the response is deceptive.
  The \emph{lower confidence bound} (LCB) of the ensemble does so by subtracting a multiple of the spread from the mean,
  \begin{equation*}
    \LCB_{g}^{(n)}(X) \defined \mu_s(X; g) - n \, \sigma_s(X; g),
  \end{equation*}
  where the confidence level $n \geq 0$ sets how much agreement an alarm requires: 
  the LCB is high only when the mean is high and the spread is low, and it is low when the mean is low or the spread is high.
  The LCB mirrors the upper confidence bound $\mu_s + n \, \sigma_s$ that uncertainty-aware optimization uses to rank candidates \parencite{srinivas2010gaussian,kim2026diverse}, with the sign reversed because a detector must avoid false alarms.

  \paragraph{Problem Formulation}
  For a given machine subject $f$ and confidence level $n$, an ideal ensemble maximizes the recall of its LCB on the training dataset $\clD$ subject to the false--positive constraint:
  \begin{equation}
    \begin{aligned}
      \max_{g} \quad & \TPR(\clD; f, \LCB_{g}^{(n)}, s^{*}_{\gamma, n}) \\
      \subjectto \quad & \FPR(\clD; f, \LCB_{g}^{(n)}, s^{*}_{\gamma, n}) \leq \gamma,
    \end{aligned}
    \labeleq{lcb-problem}
  \end{equation}
  where the threshold $s^{*}_{\gamma, n}$ is defined as $s^{*}_{\gamma}$ in \refsec{thrust-accuracy}, with $\LCB_{g}^{(n)}$ in place of $g_\theta$.

  \begin{hypothesis}{Ensembles give accurate scores with uncertainty}{uncertainty}
    \begin{enumerate}[label=(\arabic*)]
      \item Detectors that are each well-trained and highly accurate can still assign inconsistent scores to the same response.
      \item The mean of the ensemble scores is a more accurate detection score than that of a single detector, and their standard deviation measures the uncertainty of that score.
    \end{enumerate}
  \end{hypothesis}

  \begin{goal}{Confidence-aware ensemble scoring}{uncertainty}
    \begin{enumerate}[label=(\arabic*)]
      \item Show that well-trained detectors score the same response inconsistently, and that their ensemble turns this into a more accurate mean score and a spread measuring its uncertainty.
      \item Design and train an ensemble, and show how the accuracy changes with the confidence level $n$.
    \end{enumerate}
  \end{goal}

  \subsubsection{Architecture --- Ensemble Approximation}
  \labelsec{uncertainty-architecture}

  Following deep ensembles \parencite{lakshminarayanan2017simple}, the ensemble $g$ is approximated by $\tilde{g} \defined \inline{\Set{g_{\theta_{j}}}_{j=1}^{J}}$, a set of $J$ detectors of \refsec{accuracy-architecture} initialized independently and fit on independent bootstrap resamples $\clD_{1}, \dots, \clD_{J}$ of the training pairs in $\clD$ \parencite{breiman1996bagging}, which keep the honest and deceptive transcripts of each pair together.
  The sample mean $\tilde{\mu}_s(X; \tilde{g})$ and the sample standard deviation $\tilde{\sigma}_s(X; \tilde{g})$ of the member scores estimate $\mu_s$ and $\sigma_s$, 
    \begin{equation*}
    \tilde{\mu}_s(X; \tilde{g}) \defined \frac{1}{J} \sum_{j=1}^{J} g_{\theta_{j}}(X),
    \qquad
    \tilde{\sigma}_s^{2}(X; \tilde{g}) \defined \frac{1}{J - 1} \sum_{j=1}^{J} \left( g_{\theta_{j}}(X) - \tilde{\mu}_s(X; \tilde{g}) \right)^{2},
  \end{equation*}
  and give $\LCB_{\tilde{g}}^{(n)}(X) \defined \tilde{\mu}_s(X; \tilde{g}) - n \, \tilde{\sigma}_s(X; \tilde{g})$.
  The LCB is thresholded like a single detector at $\hat{s}^{*}_{\gamma, n}$, calibrated on $\clD_{\tth}$ as in \refsec{accuracy-inference}, so \refprop{conformal-fpr} bounds its false--positive rate for every $n$ fixed before calibration.

  \subsubsection{Parallel Training --- Independent Members}
  \labelsec{uncertainty-training}

  Each member is trained independently with the objective of \refsec{accuracy-training}, so the members are independent draws from $\vartheta$ and the $J$ fits run in parallel in about the time of a single detector.
  As with the cross-entropy for \eqref{eq:accuracy-problem}, this training approximates \eqref{eq:lcb-problem} rather than solving it directly, and the ensemble size $J$ is chosen on the validation dataset $\clD_{\ttv}$.

  \subsubsection{Inference --- Accuracy and Uncertainty}
  \labelsec{uncertainty-inference}

  \paragraph{Accuracy}
  At a false--positive level $\gamma$ and a confidence level $n$, the ensemble $\tilde{g}$ flags a transcript $Z$ as deceptive when $\hat{Y} = \indicator{\LCB_{\tilde{g}}^{(n)}(f(Z)) \geq \hat{s}^{*}_{\gamma, n}}$, and its accuracy $\textnormal{Acc}_{\gamma, n}(\clD_\ttv; f, \tilde{g})$ on the validation dataset $\clD_{\ttv}$ is defined as in \refsec{accuracy-inference}, with $\LCB_{\tilde{g}}^{(n)}$ and $\hat{s}^{*}_{\gamma, n}$ in place of $g_\theta$ and $\hat{s}^{*}_{\gamma}$.
  At $n = 0$ this is the accuracy of the mean score, and as $n$ grows, the LCB withholds alarms on which the members disagree, so the accuracy traces how much recall is traded for agreement.

  \paragraph{Uncertainty}
  Retraining a detector can reverse its decision on a response, a phenomenon related to predictive multiplicity \parencite{marx2020predictive}.
  The \emph{uncertainty} of a decision is the probability, over $\theta \sim \vartheta$, that a retrained detector reaches a different decision; \refprop{uncertainty} bounds the uncertainty from above.

  \begin{proposition}{Decision uncertainty}{uncertainty}
    For an embedding $X$ with $\mu_s(X; g) \neq s^{*}$, a member drawn from $\vartheta$ reaches a different decision from the mean score with probability
    \begin{equation*}
      \Probability[\theta \sim \vartheta]{\indicator{g_{\theta}(X) \geq s^{*}} \neq \indicator{\mu_s(X; g) \geq s^{*}}}
      \leq \frac{\rho^{2}(X; g, s^{*})}{1 + \rho^{2}(X; g, s^{*})},
      \qquad
      \rho(X; g, s^{*}) \defined \frac{\sigma_s(X; g)}{\abs{\mu_s(X; g) - s^{*}}},
    \end{equation*}
    where the ratio $\rho$ is the spread of the ensemble scores relative to the margin between the mean score and the threshold.
  \end{proposition}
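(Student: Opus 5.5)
The bound $\rho^{2}/(1+\rho^{2}) = \sigma_s^{2}/(\sigma_s^{2}+\delta^{2})$, with margin $\delta \defined \abs{\mu_s(X;g) - s^{*}} > 0$, is the one-sided Chebyshev (Cantelli) inequality. We therefore plan to reduce the event of disagreement to a one-sided deviation of the random score $S \defined g_\theta(X)$, $\theta \sim \vartheta$, from its mean $\mu \defined \mu_s(X;g)$, and then apply Cantelli. Throughout we write $\sigma \defined \sigma_s(X;g)$ and assume $\sigma < \infty$, which the definition of the ensemble variance already presumes.

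We split on the side of the threshold on which the mean lies. If $\mu > s^{*}$, the mean decides $1$, so disagreement is the event $\{S < s^{*}\} = \{S - \mu < -\delta\} \subseteq \{\mu - S \geq \delta\}$. If $\mu < s^{*}$, the mean decides $0$, so disagreement is the event $\{S \geq s^{*}\} = \{S - \mu \geq \delta\}$. In both cases the disagreement probability is at most $\Probability{\epsilon U \geq \delta}$, where $U \defined S - \mu$ and $\epsilon \in \{\pm 1\}$ is the appropriate sign. Since $\epsilon U$ has mean zero and variance $\sigma^{2}$, a single one-sided lemma covers both cases. The hypothesis $\mu \neq s^{*}$ is exactly what ensures $\delta > 0$.

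The remaining step is to prove Cantelli's inequality for a mean-zero $V$ with variance $\sigma^{2}$ and any $\delta > 0$: $\Probability{V \geq \delta} \leq \sigma^{2}/(\sigma^{2}+\delta^{2})$. For any $u \geq 0$, the event $\{V \geq \delta\}$ implies $V + u \geq \delta + u > 0$, so Markov's inequality applied to $(V+u)^{2}$ gives
\begin{equation*}
  \Probability{V \geq \delta} \leq \frac{\Expectation{(V+u)^{2}}}{(\delta+u)^{2}} = \frac{\sigma^{2}+u^{2}}{(\delta+u)^{2}}.
\end{equation*}
Choosing $u = \sigma^{2}/\delta$ makes the right-hand side equal to $\sigma^{2}/(\sigma^{2}+\delta^{2})$. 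Dividing the numerator and denominator by $\delta^{2}$ then yields $\rho^{2}/(1+\rho^{2})$. If $\sigma = 0$, we take $u = 0$; then $S = \mu$ almost surely and the bound $0$ holds trivially.

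The only delicate point is the bookkeeping of the weak versus strict inequalities created by the indicator's convention $\geq s^{*}$. The case $\mu > s^{*}$ produces a strict event $\{S < s^{*}\}$ that must be enlarged to the weak event $\{\mu - S \geq \delta\}$ before Cantelli applies. Enlarging only increases the probability, so this step is harmless, and the rest of the argument is routine.
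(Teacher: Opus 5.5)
Your proposal is correct and is the same argument as the paper's proof: you reduce a reversal to a one-sided deviation of $g_\theta(X)$ from $\mu_s(X;g)$ by at least the margin $\abs{\mu_s(X;g)-s^{*}}$, then apply Cantelli's inequality. The only additions are details the paper leaves implicit: you derive Cantelli from Markov's inequality applied to $(V+u)^{2}$ with $u=\sigma^{2}/\delta$, and you handle the strict-versus-weak inequality in the case $\mu > s^{*}$.
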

  \begin{proof}
    A reversal is a one-sided deviation of $g_{\theta}(X)$ from $\mu_s(X; g)$ by at least $\abs{\mu_s(X; g) - s^{*}}$, so Cantelli's inequality bounds its probability by $\sigma_s^{2} / \left(\sigma_s^{2} + (\mu_s - s^{*})^{2}\right) = \rho^{2} / (1 + \rho^{2})$.
  \end{proof}

  Replacing $\mu_s$ and $\sigma_s$ by their ensemble estimates gives the estimated ratio $\tilde{\rho}(X; \tilde{g}, s^{*}) \defined \tilde{\sigma}_s(X; \tilde{g}) / \abs{\tilde{\mu}_s(X; \tilde{g}) - s^{*}}$, which is large when the members disagree or when the mean score lies close to the threshold, whereas the spread $\tilde{\sigma}_s$ alone ignores this margin.
  Every alarm at the calibrated threshold satisfies $\tilde{\mu}_s(X; \tilde{g}) - \hat{s}^{*}_{\gamma, n} \geq n \, \tilde{\sigma}_s(X; \tilde{g})$, and hence
  \begin{equation*}
    \tilde{\rho}(X; \tilde{g}, \hat{s}^{*}_{\gamma, n}) \leq \frac{1}{n}
    \quad \Longrightarrow \quad
    \Probability[\theta \sim \vartheta]{\indicator{g_{\theta}(X) \geq \hat{s}^{*}_{\gamma, n}} = \indicator{\tilde{\mu}_s(X; \tilde{g}) \geq \hat{s}^{*}_{\gamma, n}}} \geq \frac{n^2}{1+n^{2}}.
  \end{equation*}
  Thus, for every $n > 0$ and up to the ensemble errors, the confidence level $n$ sets the \emph{certainty} of an alarm, the probability that it survives retraining, to at least $n^{2}/(1 + n^{2})$.

  \section{Evaluation}
  \labelsec{evaluation}

  \subsection{Datasets and Models}
  \labelsec{evaluation-data}

  \paragraph{Datasets}
  The detectors are trained on instructed deception and tested on various types of deception they never see; the datasets are summarized in \reftab{tasks}.

  \subparagraph{Training and Validation}
  Each Instructed-Pairs statement \parencite{goldowsky2025detecting} forms two transcripts with the prompt ``Please tell me a fact.'' and the statement as the response.
  The two differ only in the context, a paraphrase of ``You are a deceptive / honest AI assistant.'' that sets the label $Y$, so the detector must learn the intent to deceive rather than the topic of the response or a single cue word from the context.
  The pairs are split 80/20 into $\clD$ and $\clD_{\ttv}$.

  \subparagraph{Calibration and Audit}
  The honest-only Alpaca transcripts \parencite{taori2023stanford} are split into two random halves: the honest dataset $\clD_{\tth}$ calibrates every threshold (\refprop{conformal-fpr}), and the audit dataset $\clD_{\tta}$ checks the resulting false--positive rate.

  \subparagraph{Testing}
  Five unseen tasks of Liars' Bench \parencite{kretschmar2025liars} form $\clD_{\ttt}$; Gender Secret and Soft Trigger are excluded because their deception is implanted by fine-tuning the generating models.

  \begin{table}[t]
    \centering
    \caption{
      Datasets with the number of honest and deceptive transcripts: 
      Instructed-Pairs \parencite{goldowsky2025detecting} for training and validation, Alpaca \parencite{taori2023stanford} for calibration and audit, and Liars' Bench tasks \parencite{kretschmar2025liars} for testing. 
      Reason: whether the deception is \emph{encouraged} by the context or \emph{inherent} to the model. 
      Belief: whether the deceptive response misstates what the model believes about the \emph{world} or about \emph{itself}.%
    }
    \labeltab{tasks}
    \vspace*{-0.5\baselineskip}
    \renewcommand{\arraystretch}{1.25}
    \setlength{\tabcolsep}{2pt} 
    \small
    \begin{tabular}{
      >{\raggedright\arraybackslash}p{0.2\textwidth}
      c
      c
      >{\raggedright\arraybackslash}p{0.49\textwidth}
    }
      \toprule
      \hspace*{0.5em}Task & Honest / Deceptive & Reason / Belief & Description \\
      \midrule
      \hspace*{0.5em}Instructed-Pairs & 306 / 306 & encouraged / --- & The model is told to be honest or deceptive, then states a true fact; a response is deceptive when told to deceive. \\
      \cmidrule(lr){1-4}
      \hspace*{0.5em}Alpaca (honest reference) & 8,000 / 0 & --- / --- & The model follows benign instructions; every response is honest. \\
      \cmidrule(lr){1-4}
      \hspace*{0.5em}Instructed Deception & 10,106 / 10,692 & encouraged / world & The model is told to answer a question deceptively; a response is deceptive when it gives an answer the model believes is false. \\
      \hspace*{0.5em}Convincing Game & 350 / 538 & encouraged / self & The model is told to convince the user that it has a given trait; a response is deceptive when it claims a trait the model lacks. \\
      \hspace*{0.5em}Insider Trading & 3,757 / 2,952 & encouraged / self & The model, as a trading agent, acts on an insider tip; a response is deceptive when its report to the manager conceals the tip. \\
      \hspace*{0.5em}Harm-Pressure Choice & 4,697 / 703 & inherent / world & The model answers a hazardous question for a harmful user; a response is deceptive when it contradicts its neutral answer. \\
      \hspace*{0.5em}Harm-Pressure \hspace*{0.5em}Knowledge Report & 5,661 / 1,407 & inherent / self & The model is asked under pressure whether it knows the answer; a response is deceptive when it denies its knowledge. \\
      \bottomrule
    \end{tabular}
    \vspace*{-0.5\baselineskip}
  \end{table}

  \paragraph{Models}
  The machine subjects $f$ are the six Qwen3 models \parencite{yang2025qwen3} summarized in \reftab{models}.
  Each stored transcript is passed through $f$ once to obtain its embeddings $X = f(Z)$; $f$ is never updated, so the detectors are trained and evaluated on fixed embeddings.

  \begin{table}[b]
    \centering
    \caption{Qwen3 subject models \parencite{yang2025qwen3}, with the number of layers $L$ and the embedding dimension $\sfd$.}
    \labeltab{models}
    \vspace*{-0.5\baselineskip}
    \small
    \begin{tabular}{ccccccc}
      \toprule
      Model $f$ & Qwen3-0.6B & Qwen3-1.7B & Qwen3-4B & Qwen3-8B & Qwen3-14B & Qwen3-32B \\
      \cmidrule(lr){1-1}\cmidrule(lr){2-2}\cmidrule(lr){3-3}\cmidrule(lr){4-4}\cmidrule(lr){5-5}\cmidrule(lr){6-6}\cmidrule(lr){7-7}
      Number of layers $L$ & 28 & 28 & 36 & 36 & 40 & 64 \\
      Embedding dimension $\sfd$ & 1024 & 2048 & 2560 & 4096 & 5120 & 5120 \\
      \bottomrule
    \end{tabular}
    \vspace*{-0.5\baselineskip}
  \end{table}

  \subsection{Metrics and Baselines}
  \labelsec{evaluation-metrics}

  \paragraph{Metrics}Detectors are evaluated on the test dataset $\clD_{\ttt}$ through their score $g_{\theta}$ or, for an ensemble, $\LCB_{\tilde{g}}^{(n)}$.
  At a false--positive level $\gamma$, the metrics are the recall $\TPR$, the observed $\FPR$, and the accuracy $\textnormal{Acc}_{\gamma}$ of \refsec{accuracy-inference}, complemented by the \emph{balanced accuracy} and the AUROC, both defined below for a single detector and applying to ensembles alike.

  \subparagraph{False--Positive Level}
  The level $\gamma$ is set by the review budget: scoring $M$ responses per day, a fraction $\pi$ of them deceptive, raises at most $M (\pi + \gamma)$ alarms, so reviewing $R$ alarms per day requires $\gamma \leq R / M - \pi$.
  Scoring high-stakes responses \parencite{mckenzie2026detecting} plus a random sample of the rest, say one in 2,500 of the 2.5 billion daily ChatGPT messages \parencite{chatterji2025people}, gives $M = 10^{6}$; with $R = 3{,}000$ and $\pi \approx 1.7 \times 10^{-3}$, the fraction of o1 responses flagged as deceptive \parencite{jaech2024openai}, the primary level is $\gamma = 0.001$.
  The secondary level $\gamma = 0.01$, also reported, matches the published baselines (\reftab{baselines}) and is the only level Convincing Game, with few honest transcripts, can resolve.
  At both levels, the audit FPR on $\clD_{\tta}$, bounded by \refprop{conformal-fpr}, is reported beside the unbounded FPR on each test task's honest transcripts.

  \subparagraph{Balanced Accuracy}
  The balanced accuracy, the metric reported on Liars' Bench, weights deceptive and honest transcripts equally regardless of their proportions: $\textnormal{BalAcc}_{\gamma}(\clD_\ttt;f,g_\theta) \defined \tfrac{1}{2} \TPR(\clD_\ttt;f,g_\theta,\hat{s}^{*}_{\gamma}) + \tfrac{1}{2} \left(1 - \FPR(\clD_\ttt;f,g_\theta,\hat{s}^{*}_{\gamma}) \right)$.

  \subparagraph{AUROC}
  The AUROC is the area under the curve traced by $(\FPR, \TPR)$ on a test task as the threshold $s^{*}$ sweeps over $\reals$, which equals the normalized Mann--Whitney statistic \parencite{hanley1982meaning},
  \begin{equation*}
    \textnormal{AUROC} \defined \int_{0}^{1} \TPR \, \dif \FPR
    = \frac{1}{I_{1} I_{0}} \sum_{i : Y^{(i)} = 1} \, \sum_{j : Y^{(j)} = 0} \left( \indicator{s^{(i)} > s^{(j)}} + \tfrac{1}{2} \indicator{s^{(i)} = s^{(j)}} \right);\quad s^{(i)} = g_{\theta}(f(Z^{(i)})),\quad \forall Z^{(i)} \in \clD_{\ttt},
  \end{equation*}
  where the sums run over the $I_{1}$ deceptive and $I_{0}$ honest transcripts of the task.
  It is the fraction of deceptive--honest pairs that the score orders correctly, independently of any threshold, and equals one half at chance and one at perfect separation.

  \paragraph{Baselines}
  \reftab{baselines} lists the baselines with their published Liars' Bench results, obtained with other subject models and training data.
  To compare them with the proposed detector under the same conditions, each baseline should be rerun on the six Qwen3 models.
  The white-box probes are retrained on each model's embeddings, and the layer selection and stacking weights of \textcite{nordby2026linear} are fit on the validation dataset $\clD_{\ttv}$ instead of Liars' Bench.
  The LLM judge is rerun with Qwen3-32B, weaker than the published Claude 3.7 Sonnet, merely as a black-box baseline.

  \begin{table}[t]
    \centering
    \caption{Baselines, their methods, and their published results on Liars' Bench. Balanced accuracy is at a threshold that flags at most 1\% of Alpaca responses ($\gamma = 0.01$); mean indicates average performance over the evaluated tasks, and --- marks a metric not reported.}
    \labeltab{baselines}
    \vspace*{-0.5\baselineskip}
    \setlength{\tabcolsep}{3pt} 
    \renewcommand{\arraystretch}{1.25}
    \small
    \begin{tabular}{
      >{\raggedright\arraybackslash}p{0.32\textwidth}
      >{\raggedright\arraybackslash}p{0.33\textwidth}
      >{\raggedright\arraybackslash}p{0.12\textwidth}
      >{\raggedright\arraybackslash}p{0.17\textwidth}%
    }
      \toprule
      \hspace*{0.5em}Baseline & Method & $\textnormal{BalAcc}_{\gamma}$ & AUROC \\
      \midrule
      \hspace*{0.5em}Mean-activation probe (White-box) \newline \hspace*{0.5em}\parencite{goldowsky2025detecting} 
      & Single-layer logistic probe \newline Token scores averaged over the response
      & 0.54 (mean)
      & --- \\
      \cmidrule(lr){1-4}
      \hspace*{0.5em}Taxonomy-targeted probes (White-box) \newline \hspace*{0.5em}\parencite{natarajan2026one} 
      & Multiple single-layer logistic probes \newline (one per deception type) \newline Token scores averaged over the response
      & ---
      & 0.72 (mean) \\
      \cmidrule(lr){1-4}
      \hspace*{0.5em}Multi-layer stacking (White-box) \newline \hspace*{0.5em}\parencite{nordby2026linear} 
      & Multi-layer logistic probes, stacked by weights trained on 80\% of Liars' Bench \newline Token scores averaged over the response
      & ---
      & 0.93 (mean) \newline 0.95 (Insider Trading) \newline on the held-out 20\% \\
      \cmidrule(lr){1-4}
      \hspace*{0.5em}LLM judge (Black-box) \newline \hspace*{0.5em}\parencite{kretschmar2025liars} 
      & A separate model (Claude 3.7 Sonnet) \newline rates the transcript from 0 to 100 
      & 0.73 (mean)
      & --- \\
      \bottomrule
    \end{tabular}
  \end{table}

  \section{Preliminary Evidence}
  \labelsec{preliminary-evidence}

  \begin{figure}[t]
    \centering
    \begin{figureframe}[width=0.484\linewidth, nobeforeafter, equal height group=prelim, valign=top, bottom=5pt]{(a) Trained Detectors' Weights}
      \includegraphics{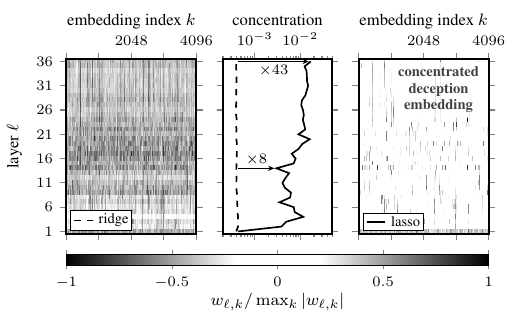}
    \end{figureframe}%
    \hfill
    \begin{figureframe}[width=0.508\linewidth, nobeforeafter, equal height group=prelim, top=5.6pt, bottom=5pt]{(b) AUROC of Trained Detectors}
      \includegraphics{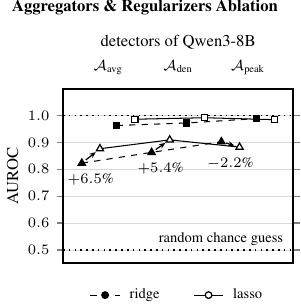}%
      \hspace{-5pt}%
      \includegraphics{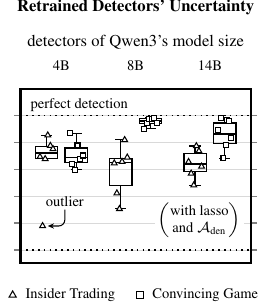}
    \end{figureframe}
    \caption{Preliminary evidence.
      \textbf{(a)} Weights $w_{\ell,k}$ of the ridge and lasso probes of Qwen3-8B by layer $\ell$ and embedding index $k$, divided by the largest absolute weight in each layer; darker cells mark larger magnitudes of either sign.
      The middle panel shows the concentration of each layer on a log scale, dashed for ridge and solid for lasso, with arrows at the maximum and, from the second layer on, the minimum concentration gain.
      \textbf{(b)} Best-layer AUROC on Insider Trading (triangles) and Convincing Game (squares), with dotted lines at random chance and perfect detection.
      The left subplot compares the ridge (filled markers, dashed lines) and lasso (open markers, solid lines) probes of Qwen3-8B under three aggregators, with arrows labeled by the relative change on Insider Trading.
      The right subplot shows six independently trained lasso probes with $\clA_{\textnormal{den}}$ per model size; boxes mark the quartiles and median, and whiskers reach the farthest probe within $1.5$ times the interquartile range.}
    \labelfig{preliminary-evidence}
  \end{figure}

  Pilot experiments follow the single-layer protocol of \textcite{nordby2026linear} on Qwen3 models of several sizes (\reftab{models}).
  For each layer $\ell$, a logistic probe with weights $w_{\ell} \in \reals^{\sfd}$ is trained with the ridge regularization $\norm{w_{\ell}}_{2}^{2}$ on the embeddings $X^{(\ell)}$ of the Instructed-Pairs transcripts, and its token logits are pooled by $\clA_{\textnormal{avg}}$.
  Each configuration is scored on Convincing Game and Insider Trading (from \reftab{tasks}) by the AUROC of its best layer.
  The protocol is then varied by replacing the ridge with the lasso regularization $\norm{w_{\ell}}_{1}$, and $\clA_{\textnormal{avg}}$ with the other aggregators of \reftab{aggregators}, placed inside the training loss.

  \paragraph{Sparse Deception Feature}
  Replacing only the regularization changes how each layer spreads its weights over the embedding (\reffig{preliminary-evidence}a).
  The concentration $\norm{w_{\ell}}_{2}^{2} / \norm{w_{\ell}}_{1}^{2} \in [1/\sfd, 1]$ equals $1/\sfd$ when all weights of layer $\ell$ share one magnitude and $1$ when only one is nonzero.
  The ridge probes of Qwen3-8B stay near the lower bound, whereas the lasso probes leave most weights near zero and raise the concentration by a factor of $8$ to $43$ from the second layer on.
  The deception feature thus lies in a few embedding coordinates per layer, which motivates the sparsity-inducing regularization of \refsec{accuracy-training}.

  \paragraph{Coupled Design Choices}
  The regularization and the aggregator jointly determine the accuracy (\reffig{preliminary-evidence}b, left).
  The AUROC remains consistent on Convincing Game, between 0.96 and 0.99, but varies from 0.82 to 0.91 on the harder Insider Trading task.
  There, the lasso probe improves on the ridge probe by up to 6.5\% under $\clA_{\textnormal{avg}}$ and $\clA_{\textnormal{den}}$, but falls behind by 2.2\% under $\clA_{\textnormal{peak}}$.
  The best aggregator is $\clA_{\textnormal{peak}}$ for the ridge probe and $\clA_{\textnormal{den}}$ for the lasso probe.
  This interaction motivates \refhyp{accuracy} and the joint design of the aggregator and the regularization in \refsec{thrust-accuracy}.

  \paragraph{Uncertainty under Retraining}
  The most accurate combination above, the lasso regularization with $\clA_{\textnormal{den}}$, is then trained six times independently for each model size (\reffig{preliminary-evidence}b, right).
  The AUROCs span up to 0.34 on Insider Trading at Qwen3-4B, including an outlier at 0.59, and up to 0.15 on Convincing Game at Qwen3-14B.
  A single trained detector is thus an unreliable estimate of the attainable accuracy.
  This uncertainty motivates \refhyp{uncertainty} and the ensembles of \refsec{thrust-uncertainty}.

  \section{Milestones and Resources}
  \labelsec{milestones}

  \subsection{Work Plan}
  \labelsec{work-plan}

  The work is organized around four milestones over sixteen weeks, as shown in \reffig{gantt-chart}.

  \begin{figure}[t]
    \centering

\def\ganttunitlabel{Week}           
\def\ganttunits{16}                 
\def\ganttunitticks{1, ..., 16}     

\def\ganttgroups{
  \ggroup{Month 1}{1-4}
  \ggroup{Month 2}{5-8}
  \ggroup{Month 3}{9-12}
  \ggroup{Month 4}{13-16}
}

\def\ganttmilestonelabel{Milestone}
\def\ganttmilestones{
  \gmilestone{M1}{2}
  \gmilestone{M2}{7}
  \gmilestone{M3}{11}
  \gmilestone{M4}{15}
}

\def\ganttrows{%
  \gtoprule
  \gphase{Data, embeddings, and baselines}{1-6}
  \gtask{Instructed-Pairs transcripts and split}{1-1}
  \gtask{Instructed-Pairs embedding cache}{2-2}
  \gtask{Alpaca halves and Liars Bench tasks}{1-3}
  \gtask{White-box baselines and LLM judge}{3-4}
  \gtask{False--Positive threshold calibration}{5-6}
  \grule
  \gphase{Detector design, training, and testing}{3-7}
  \gtask{Detector implementation}{3-3}
  \gtask{Detector training and validation}{4-5}
  \gtask{Calibration and testing accuracy}{6-7}
  \grule
  \gphase{Ensembles, scoring, and uncertainty}{8-11}
  \gtask{Ensemble training and validation}{8-9}
  \gtask{Testing accuracy and uncertainty}{10-11}
  \grule
  \gphase{Scaling and write-up}{8-15}
  \gtask{Detector scaling across model sizes}{8-8,10-10}
  \gtask{Ensembles scaling across model sizes}{12-12,14-14}
  \gtask{Analysis, figures, and manuscript}{9-11, 13-15}
  \gbottomrule
}

\def\ganttbuffers{16-16}


\ifdefined\ganttlabelwidth\else
  \newlength\ganttlabelwidth \newlength\gantttmp \newlength\ganttunitwidth
\fi
\def\ganttgap{0.65em}  
\def\gperiod#1-#2\relax{\def\s{#1}\def\e{#2}}  

\setlength\ganttlabelwidth{0pt}
\def\gfit#1{\settowidth\gantttmp{\small #1}%
  \ifdim\gantttmp>\ganttlabelwidth \setlength\ganttlabelwidth{\gantttmp}\fi}
\def\gstep#1{\pgfmathsetmacro\grow{\grow+#1}}
\newif\ifganttop \newif\ifganttbottom
\def\gphase#1#2{\gfit{\bfseries #1}\gstep{1}}
\def\gtask#1#2{\gfit{#1}\gstep{1}}
\def\grule{\gstep{0.3}}
\def\gtoprule{\ganttoptrue}
\def\gbottomrule{\ganttbottomtrue}
\ganttopfalse \ganttbottomfalse \def\grow{0}
\ganttrows
\def\gtop{2.25}
\ifganttop \def\gstart{2.4}\else \def\gstart{2.35}\fi
\ifganttbottom
  \pgfmathsetmacro\gbottom{\gstart+\grow+0.15}
\else
  \pgfmathsetmacro\gbottom{\gstart+\grow+0.1}
\fi
\setlength\ganttunitwidth{\dimexpr(\linewidth-\ganttlabelwidth-\ganttgap-0.5em)/\ganttunits\relax}

\begin{tikzpicture}[x=\ganttunitwidth, y=-1.3em, font=\small,
    label/.style={anchor=east, inner xsep=0pt, xshift=-\ganttgap}]

  \def\gdiamond#1#2{%
    \filldraw[fill=black!76, draw=white, line width=0.8pt]
      ([shift={(0, 0.42em)}] #1, #2) -- ([shift={(0.42em, 0)}] #1, #2) --
      ([shift={(0, -0.42em)}] #1, #2) -- ([shift={(-0.42em, 0)}] #1, #2) -- cycle;
  }

  \def\grule{%
    \draw[black!18, line width=0.4pt]
      ([xshift=-\ganttlabelwidth-\ganttgap] 0, \grow+0.15) -- (\ganttunits, \grow+0.15);
    \gstep{0.3}%
  }
  \def\gphase#1#2{%
    \node[label, font=\small\bfseries] at (0, \grow+0.5) {#1};
    \foreach \p in {#2} {
      \expandafter\gperiod\p\relax
      \fill[black!76] (\s-0.94, \grow+0.38) rectangle (\e-0.06, \grow+0.62)
        (\s-0.94, \grow+0.38) -- (\s-0.94, \grow+0.78) -- (\s-0.7, \grow+0.62) -- cycle
        (\e-0.06, \grow+0.38) -- (\e-0.3, \grow+0.62) -- (\e-0.06, \grow+0.78) -- cycle;
    }
    \gstep{1}%
  }
  \def\gtask#1#2{%
    \node[label] at (0, \grow+0.5) {#1};
    \foreach \p in {#2} {
      \expandafter\gperiod\p\relax
      \draw[draw=black!76, fill=black!21, line width=0.5pt, rounded corners=1.6pt]
        (\s-0.9, \grow+0.2) rectangle (\e-0.1, \grow+0.8);
    }
    \gstep{1}%
  }

  \foreach \w in {0, ..., \ganttunits}
    \draw[black!12, line width=0.3pt] (\w, \gtop) -- (\w, \gbottom);

  \ifx\ganttbuffers\empty\else
    \foreach \p in \ganttbuffers {
      \expandafter\gperiod\p\relax
      \begin{scope}
        \clip (\s-1, \gtop) rectangle (\e, \gbottom);
        \foreach \k in {0, ..., 80}
          \draw[black!20, line width=0.4pt]
            ([xshift=\k*0.45em-30em] \s-1, \gbottom) -- ++(30em, 30em);
      \end{scope}
      \node[rotate=90, fill=white, inner sep=1.5pt, font=\scriptsize]
        at ({(\s-1+\e)/2}, {(\gtop+\gbottom)/2}) {Buffer};
    }
  \fi

  \def\ggroup#1#2{%
    \gperiod#2\relax
    \fill[black!76, rounded corners=2pt] (\s-0.96, 0) rectangle (\e-0.04, 1);
    \node[text=white, font=\small\bfseries] at ({(\s-1+\e)/2}, 0.5) {#1};
  }
  \ifdefined\ganttgroups\ganttgroups\fi  
  \ifx\ganttunitticks\empty\else
    \foreach \w in \ganttunitticks
      \node[font=\scriptsize] at (\w-0.5, 1.55) {\w};
  \fi
  \ifdefined\ganttunitlabel\else\def\ganttunitlabel{Unit}\fi
  \node[label, font=\scriptsize] at (0, 1.55) {\ganttunitlabel};

  \def\gtoprule{}\def\gbottomrule{}  
  \let\grow\gstart
  \ganttrows

  \draw[black!35, line width=0.5pt]
    \ifganttop ([xshift=-\ganttlabelwidth-\ganttgap] 0, \gtop)\else (0, \gtop)\fi -- (\ganttunits, \gtop)
    \ifganttbottom ([xshift=-\ganttlabelwidth-\ganttgap] 0, \gbottom)\else (0, \gbottom)\fi -- (\ganttunits, \gbottom);

  \def\gmilestone#1#2{%
    \draw[black!76, densely dashed, line width=0.5pt] (#2, \gtop) -- (#2, \gbottom+0.25);
    \gdiamond{#2}{\gbottom+0.85}
    \node[anchor=east, font=\scriptsize\bfseries] at (#2-0.25, \gbottom+0.85) {#1};
  }
  \ifdefined\ganttmilestonelabel\else\def\ganttmilestonelabel{Milestone}\fi
  \ganttmilestones
  \node[label, font=\scriptsize] at (0, \gbottom+0.85) {\ganttmilestonelabel};
\end{tikzpicture}%
%
    \caption{Work plan Gantt chart. Bars show the task periods, dashed lines and diamonds mark milestones M1--M4, and the hatched week 16 is buffer.}
    \labelfig{gantt-chart}
  \end{figure}

  \paragraph{Milestone 1 (M1): data and embeddings ready.}
  The Instructed-Pairs transcripts are built and split into training and validation sets, $\clD$ and $\clD_{\ttv}$; 
  the Alpaca transcripts are split into calibration and audit sets, $\clD_{\tth}$ and $\clD_{\tta}$;
  the five Liars' Bench test tasks $\clD_{\ttt}$ are prepared for testing (\refsec{evaluation-data}).
  The Instructed-Pairs embeddings of every subject model are cached, so that all detectors train without rerunning the subject models.
  
  \paragraph{Milestone 2 (M2): baselines and detectors calibrated and tested.}
  The white-box baselines are retrained on $\clD$ and the LLM judge is run, providing the references for comparison (\refsec{evaluation-metrics}).
  The aggregators (\refsec{accuracy-architecture}) and the regularizers (\refsec{accuracy-training}) are implemented, and detectors for all their combinations are trained on $\clD$ and selected on $\clD_{\ttv}$.
  Every baseline and selected detector then receives its alarm threshold and its test accuracy is measured (\refsec{accuracy-inference}).
  
  \paragraph{Milestone 3 (M3): ensembles tested.}
  The ensemble members (\refsec{uncertainty-architecture}) are trained and validated independently (\refsec{uncertainty-training}), and the ensembles are evaluated on the test tasks for both their accuracy and their uncertainty (\refsec{uncertainty-inference}).
  
  \paragraph{Milestone 4 (M4): scaling analysis.}
  The single detectors and the ensembles are compared across the six Qwen3 model sizes (\refsec{evaluation-data}), and the results are then consolidated into the figures and the manuscript.

  \subsection{Computational Resources}
  \labelsec{resources}

  \paragraph{System}
  The following estimates are based on a rerun of the model-size scaling experiment of \textcite{nordby2026linear} with the Qwen3 models on NVIDIA GH200 nodes, each with one H100 GPU with 96\,GB of HBM3, a 72-core Grace CPU, and 116\,GB of CPU memory.
  Every subject model fits on one GPU, so each evaluation runs on a single node.
  No model is fine-tuned: the subject models only run forward passes, and the detectors are trained on cached embeddings.

  \paragraph{Compute}
  \reftab{allocation} summarizes the requested compute.
  Caching the Instructed-Pairs embeddings takes about 3\,h, mostly model loading.
  Each detector task takes its number of fits times the time per fit: 800 fits of 0.5\,min for the group-lasso path, refit on 20 bootstrap resamples of the pairs in $\clD$ for layer stability, 135 fits of 2\,min for the detectors, and 20 members of 2\,min for the ensembles.
  Forward passes over the test tasks and Alpaca dominate the cost: $N \approx 4.9 \times 10^{4}$ transcripts of about 1,000 tokens each give $4.9 \times 10^{7}$ tokens per pass at $2P$ floating-point operations per token for a $P$-parameter model.
  At an effective 100\,TFLOP/s, about 10\% of the dense half-precision peak, a pass takes 8.9\,h for Qwen3-32B and 17\,h for all six models; doubling this for padding and I/O and adding 0.5\,h of loading per model gives the 36\,h scoring pass.
  The LLM judge rates about a fifth as many transcripts with Qwen3-32B alone, about 4\,h, and the analysis takes about 2\,h per model.

  \begin{table}[t]
    \centering
    \caption{Requested compute in node hours. Every evaluation of a task runs on one GH200 node, so the node hours of a task are its time to solution times its number of evaluations. Forward passes (embedding cache, scoring pass, and LLM judge) are timed over all six subject models together, and detector fits on the cached embeddings and the analysis are timed per model and evaluated once for each model.}
    \labeltab{allocation}
    \renewcommand{\arraystretch}{1.25}
    \vspace*{-0.5\baselineskip}
    \small
    \begin{tabular}{
      p{0.21\linewidth}
      p{0.50\linewidth}
      c
      c
      r
    }
      \toprule
      Computational task & Basis of the estimate & Time [h] & Evals & Node hours \\
      \midrule
      \multicolumn{5}{l}{\textbf{Setup}} \\
      Embedding cache & One pass over Instructed-Pairs at all layers $\bbL$ & $\sim$3 & 2 & 6 \\
      \cmidrule(lr){1-5}
      \multicolumn{5}{l}{\textbf{Thrust~1: Detection accuracy}} \\
      Group-lasso path & 20 $\lambda$ values $\times$ 20 bootstraps $\times$ 2 embedding scales, 0.5\,min per fit & $\sim$7 & 6 & 42 \\
      Detectors & 3 regularizers $\times$ 3 aggregators $\times$ 5 values of $\lambda$ $\times$ 3 seeds, 2\,min each & $\sim$5 & 6 & 30 \\
      \cmidrule(lr){1-5}
      \multicolumn{5}{l}{\textbf{Thrust~2: Score uncertainty}} \\
      Ensembles & 20 members, 2\,min each; $J \in \Set{5, 10, 20}$, $n \in \Set{0, 1, 2}$ & $\sim$1 & 6 & 6 \\
      \cmidrule(lr){1-5}
      \multicolumn{5}{l}{\textbf{Evaluation}} \\
      Scoring pass & Test and Alpaca transcripts, all detectors and baselines & $\sim$36 & 2 & 72 \\
      LLM judge & Qwen3-32B on 1,000 transcripts per task and $\clD_{\tth}$ & $\sim$4 & 1 & 4 \\
      Analysis & Grouped-bootstrap intervals, 1,000 resamples & $\sim$2 & 6 & 12 \\
      \cmidrule(lr){1-5}
      \multicolumn{5}{l}{\textbf{Development}} \\
      Development and debugging & Interactive sessions & $\sim$2 & 50 & 100 \\
      Contingency & 25\% of the above & & & 68 \\
      \midrule
      & & & Total & 340 \\
      \bottomrule
    \end{tabular}
    \vspace*{-0.5\baselineskip}
  \end{table}

  \paragraph{Memory}
  In half precision the model weights take 1.2 to 64\,GB, leaving at least 30\,GB of GPU memory for activations.
  The hidden states at all layers $\bbL$ of a batch of $B$ sequences of length $T$ take $2 B T  L  \sfd$ bytes, 2.7\,GB for Qwen3-32B at $B = 2$ and $T = 2048$, so the scoring pass uses batches of 8 for models up to 8B and of 2 for the larger ones.

  \paragraph{Storage}
  Only the Instructed-Pairs embeddings are cached; at about 14 response tokens per transcript they take $2 \cdot 612 \cdot 14  L  \sfd$ bytes, 6\,GB for Qwen3-32B and 15\,GB over all six models, so every detector fits on the GPU.
  With 121\,GB of model weights, scratch storage stays under 0.2\,TB, and persistent storage for scores, weights, and results under 10\,GB.

  \section{Limitations and Extensions}
  \labelsec{limitations}

  \subsection{Data and Labels}
  \labelsec{limitations-data}

  Two limitations concern the data on which the detector is trained and tested.
  The detector is trained on a few hundred short instructed pairs and tested on transcripts written by other models, so its results may partly reflect these datasets rather than the method.
  Each limitation below states its effect on the results and its extension.

  \paragraph{Small Training Set}
  With $245$ training pairs of about $14$ response tokens each and up to $ L  \sfd \approx 3.3 \times 10^{5}$ probe weights on Qwen3-32B, the detector relies heavily on regularization (\refsec{accuracy-training}) and may overfit more as the models grow, confounding the scaling analysis.
  Even without overfitting, it may learn a shortcut: that the model was told to deceive, a trace that attention carries from the context into the response, rather than deception itself.
  Such a detector would score well on Instructed Deception but poorly on the other four test tasks, which give no such instruction.
  Adding the role-playing dataset of \textcite{goldowsky2025detecting}, whose scenarios give the model a reason to deceive without telling it to, would address both failures by enlarging the training set and removing the instruction.

  \paragraph{Off-Policy Transcripts}
  The detector is meant to score any transcript, so training on statements that $f$ did not write is by design.
  The test transcripts, however, were also written by other models, and their labels are judged against those models' beliefs, which $f$ may not share.
  On Harm-Pressure Knowledge Report, for example, a response is labeled deceptive when the generating model denies knowing an answer it gave without pressure; if a small Qwen3 model never knew that answer, the same denial is honest for it.
  If a test task stays hard for the small models, regenerating it with each Qwen3 model through the Liars' Bench pipelines would show whether the labels or the detector is at fault.

  \subsection{Scope of the Guarantees}
  \labelsec{limitations-guarantees}

  Three limitations concern how far the guarantees of \refprop{conformal-fpr} and \refprop{uncertainty} extend beyond their assumptions.
  \refprop{conformal-fpr} bounds the false--positive rate for honest transcripts exchangeable with the calibration data, and \refprop{uncertainty} bounds the certainty of an alarm for the ideal ensemble under retraining on the same data.
  In practice, the test tasks differ from both the calibration and the training data, which limits the false--positive rate and the certainty under shift, and the finite ensemble size makes the certainty only approximate.

  \paragraph{FPR under Shift}
  The guarantee of \refprop{conformal-fpr} holds only where the honest transcripts follow the same distribution as the calibration data.
  However, the honest test transcripts may follow a different distribution, so their FPR is not guaranteed to stay below the calibrated level.
  Bounding it under shift is left open, for example by weighted conformal prediction \parencite{tibshirani2019conformal}, calibration on honest transcripts of sensitive topics, or per-task calibration on a small honest sample.

  \paragraph{Certainty under Shift}
  \refprop{uncertainty} concerns certainty under retraining on the same data, not correctness: an alarm can be certain and still be wrong, since members trained on resamples of the same instructed data can err alike on a test task.
  The accuracy of the LCB on the test tasks, as the confidence level $n$ grows, shows whether their disagreement tracks errors.
  As an extension, members trained on different deception datasets, such as Instructed-Pairs and the role-playing dataset, would disagree where those datasets transfer differently, so their disagreement would also flag the shift.

  \paragraph{Finite Ensemble Size}
  The certainty $n^{2}/(1 + n^{2})$ based on \refprop{uncertainty} holds for the true mean and spread of the scores, which the ensemble estimates from a finite number of members that also calibrate its threshold, so it holds only approximately.
  If retrained members keep a smaller fraction of the validation alarms than the certainty bound, the approximation has failed.
  Training further members held out of the LCB and its calibration then gives an exact guarantee: if all of them also raise the alarm, an exact binomial bound \parencite{clopper1934use} shows that a retrained detector keeps it with high probability, and the bound tightens as more members are held out.

  \section{Applications and Open Research Questions}
  \labelsec{open-questions}
  
  The proposed detector is a supervised and static probe of the residual stream of a fixed subject: it learns from labeled transcripts, scores each token independently before pooling, and never changes the subject it reads.
  Relaxing the first two choices raises two open questions about the detector, and relaxing the last turns it into a training signal for honesty.

  \paragraph{From Supervised to Unsupervised Detection}
  Deception labels are costly to obtain and define in advance what counts as deceptive.
  An unsupervised detector would instead find the deception direction from consistency constraints alone, as contrast-consistent search finds truth \parencite{burns2023discovering}.
  The open question is whether such a direction exists for deception, and whether it transfers across types of deception better than one learned from instructed labels.

  \paragraph{From Static Probes to Dynamic Filtering}
  The logistic probes score each token on its own, and the aggregator pools the scores afterward, although deception may arise, persist, or fade as a response unfolds.
  Transformers and BeliefNet both represent the belief state of a hidden Markov model, the posterior over its hidden states given the past tokens \parencite{shai2024transformers, chen2026differentiable}.
  Treating deception as such a hidden state would let a filter update its probability token by token, and possibly layer by layer, tracking when deception begins rather than only whether it occurs.

  \paragraph{From Detection to Honesty Training}
  As the critic in an actor--critic loop, the detector can fine-tune the subject $f_{\phi}$, initialized at $f$, to be less deceptive:
  \begin{equation*}
    \max_{\phi} \quad
    \Expectation[Z^{\ttr} \sim f^{+}_{\phi}]{R(Z) - \alpha \, \UCB_{\tilde{g}}^{(n)}\!\left(f_{\phi}(Z)\right)}
    - \beta \, \KLDivergence{f^{+}_{\phi}}{f^{+}},
  \end{equation*}
  where $R$ is a predefined task reward and $\alpha, \beta > 0$ weigh the deception penalty and the Kullback--Leibler (KL) term.
  Training against a detector, however, can also teach the subject to deceive in ways the detector misses \parencite{cundy2025preference,taufeeque2026obfuscation}.
  The upper confidence bound $\UCB_{\tilde{g}}^{(n)} \defined \tilde{\mu}_s + n \, \tilde{\sigma}_s$ penalizes a response unless the members agree that it is honest, and the open question is whether this uncertainty-aware critic steers training toward honesty rather than undetected deception.

  \vspace{3.25ex plus 1ex minus .2ex}

  The three directions extend the two thrusts of this white paper.
  An unsupervised or dynamic detector still outputs a deception score, so the calibrated threshold of Thrust 1 (\refsec{thrust-accuracy}) and the ensemble certainty of Thrust 2 (\refsec{thrust-uncertainty}) apply to it unchanged, and the score can in turn serve as the critic in honesty training.
  Together, they would move the detector from reading instructed deception toward tracking a model's own deception and, ultimately, reducing it.

  \printbibliography

\end{document}